\useunder{\uline}{\ul}{}
\let\NAT@parse\undefined
\newcommand{\PE}{\text{persistently exciting}}
\newcommand{\PoE}{\text{PoE}}
\DeclareMathOperator*{\E}{\mathbb{E}}
\newcommand{\vect}{\text{vec}}
\newcommand{\Ls}{\mathcal{L}}
\newcommand{\Xs}{{\mathcal{X}}}
\newcommand{\Ps}{{\mathcal{P}}}
\newcommand{\Ys}{{\mathcal{Y}}}
\newcommand{\R}{\mathbb{R}}
\newcommand{\I}{\mathbf{I}}
\newtheorem{definition}{Definition}[]
\newtheorem{problem}{Problem Statement}[]
\newtheorem{assumption}{Assumption}[]
\newtheorem{theorem}{Theorem}[]
\newtheorem{lemma}{Lemma}[]
\newcommand\footnoteref[1]{\protected@xdef\@thefnmark{\ref{#1}}\@footnotemark}
\newcommand\blue[1]{{#1}}
\title{\LARGE \bf Improving Neural Network Robustness via Persistency of Excitation}
\author{Kaustubh Sridhar \qquad Oleg Sokolsky \qquad Insup Lee \qquad James Weimer
\thanks{The authors would like to acknowledge the support of the Army Research Office award number W911NF2010080.}
\thanks{The authors are with the PRECISE Center in the Departments of Electrical \& Systems Engineering and Computer \& Information Science, University of Pennsylvania, Philadelphia, PA, USA;
        {\tt\small \{ksridhar, sokolsky, lee, weimerj\}@seas.upenn.edu}.}%
\thanks{Code: \href{https://github.com/kaustubhsridhar/PoE-robustness}{\tt \footnotesize github.com/kaustubhsridhar/PoE-robustness}}
}
\begin{document}

\maketitle
\thispagestyle{empty}
\pagestyle{empty}

\begin{abstract} 
Improving adversarial robustness of neural networks remains a major challenge. Fundamentally, training a neural network via gradient descent is a parameter estimation problem. In adaptive control, maintaining persistency of excitation (PoE) is integral to ensuring convergence of parameter estimates in dynamical systems to their true values. We show that parameter estimation with gradient descent can be modeled as a sampling of an adaptive linear time-varying continuous system. Leveraging this model, and with inspiration from Model-Reference Adaptive Control (MRAC), we prove a sufficient condition to constrain gradient descent updates to reference persistently excited trajectories converging to the true parameters. The sufficient condition is achieved when the learning rate is less than the inverse of the Lipschitz constant of the gradient of loss function. We provide an efficient technique for estimating the corresponding Lipschitz constant in practice using extreme value theory. Our experimental results in both standard and adversarial training illustrate that networks trained with the PoE-motivated learning rate schedule have similar clean accuracy but are significantly more robust to adversarial attacks than models trained using current state-of-the-art heuristics.
\end{abstract}

\section{INTRODUCTION} \label{sec:intro}

Neural networks are vulnerable to adversarial examples \cite{szegedy2013intriguing} and most existing defenses are still highly susceptible to white box attacks \cite{carlini2017adversarial, athalye2018obfuscated} (where the adversary has full access to the network and its defense mechanism).



We believe that adversarial robustness can be improved by leveraging the fact that every neural network training process (standard or robust) is a parameter estimation problem \cite{nar2019persistency}, where the goal is to find the true parameters of a model. A model\footnote{Model and neural network are used interchangeably.} with its true parameters, \textit{i.e.}, the parameters of the true mapping from its input space to output space, always maps similar inputs to similar outputs \cite{nar2019persistency}. We posit (and em-\\pirically demonstrate) that this implies increased adversarial robustness for neural networks with their true parameters.

\blue{In system identification and adaptive control, Persistency of Excitation (\PoE{}) conditions \cite{sastry2011adaptive} are integral to robust estimation of true parameters. They restrict parameter estimation dynamics to exponentially-stable trajectories that ensure robust convergence to true values. Further, recent work \cite{nar2019persistency} analyzed neural network training and identified the lack of PoE in gradient descent (GD) as a major roadblock on the path to robustness. Thus, the main challenge addressed by this work is ensuring neural network training dynamics, and specifically gradient descent dynamics, is persistently excited and converges to the network's true parameters.}

\blue{Earlier attempts to characterize \PoE{} for GD were either impeded by a neural network's inherent nonlinearities \cite{lu1998robust, polycarpou1991identification} or limited to simple two layer networks and specific loss functions \cite{nar2019persistency}. In this work, we overcome the nonlinearity and complexity trap faced by \cite{nar2019persistency, lu1998robust, polycarpou1991identification} with the insight of modeling GD as a discretization of an adaptive continuous-time (CT) linear time-varying (LTV) system. We take inspiration from Model-Reference Adaptive Control (MRAC) \cite{sastry2011adaptive}, where adaptive control laws are chosen such that the system's dynamics emulate a reference system's dynamics, to propose the following two-step approach.}

\blue{First, we choose a reference family of persistently excited systems with a globally exponentially stable (GES) equilibrium at the unknown true parameters of the network. Then we prove sufficient conditions for consecutive updates of the discrete-time (DT) GD dynamics to lie on the exact discretization of a system from our reference family. Our novel two part approach theoretically guarantees convergence to the unknown true parameters of any model trained by minimizing a smooth loss with GD and empirically demonstrates increased robustness to adversarial attacks in stochastic gradient descent (SGD) based standard and adversarial training.}

\blue{
Our proven sufficient condition is equivalent to scaling a baseline learning rate schedule where the initial value is a function of the inverse of Lipschitz constant of the loss gradient.} To ensure a rigorous evaluation with minimal increase in model training time, we estimate this second-order Lipschitz constant with an inexpensive addition to the baseline model training procedure via extreme value theory \cite{wood1996estimation, weng2018evaluating}. To observe the utility of our \PE{} learning schedule, we apply it to standard training on MNIST \cite{lecun1998gradient}, CIFAR10, CIFAR100 \cite{cifar} datasets, and adversarial training on CIFAR10 dataset. We see an increase in adversarial accuracy of up to \blue{$15$}  points against a 20-step PGD adversary \cite{madry2017towards} with perturbation budget $\epsilon = \frac{1}{255}$ in standard training and an increase up to \blue{$0.7$} points in adversarial training on the competitive Autoattack benchmark \cite{croce2020reliable_autoattack} (with $\epsilon = \frac{8}{255}$) composed of both white-box and black-box attacks.

\subsection{Related Work}
\textbf{\PoE{} in Control Theory and Deep Learning.}  
\PoE{} has been thoroughly explored for CT LTV systems and is essential to robust parameter estimation in guaranteeing GES of parameter error dynamics which ensures convergence of estimated parameters to the true values \cite{sastry2011adaptive, srikant_PoE}. For learning-based system identification, early work \cite{rbf1, kurdila1995persistency} found \PoE{} conditions for Radial Basis Functions but emphasizes the difficulty in characterizing PoE conditions for general neural networks because of the nonlinearities in the models \cite{lu1998robust, polycarpou1991identification}.  

Recent seminal work in \cite{nar2019persistency} aims to tackle this challenging problem. Based on the premise of robust neural networks having bounded Lipschitz constants \cite{szegedy2013intriguing}, the authors derive sufficient richness conditions on the inputs to a two-layer network with ReLU activation functions trained with GD. However, their results are specific to a two-layer network initialized close to its true optima, particular loss functions and dependent on the gradient update rule. Moreover, these conditions do not scale to modern deep neural networks. To scale, they are forced to adopt an optimization trick to force noise (for \PoE{}) into each layer of a network. This trick can also be found in other robust learning approaches \cite{lecuyer2019certified, cohen2019certified}.


To avoid the issues faced in forward analysis by \cite{nar2019persistency}, we flip the problem around: we start with a well-characterized CT LTV family of persistently excited dynamics and then find sufficient conditions for GD updates to fit on the trajectories in this family. Our approach is only dependent on the gradient update rule and in practice, generalizes to all loss functions and scales to models that converge in training.

\textbf{Techniques for Robust Learning.} 
Adversarial training (AT), first introduced in \cite{goodfellow2014explaining}, is currently, the most effective defense to white-box attacks. AT requires solving a min-max optimization problem. The inner maximization problem is approximately solved with the PGD attack in PGD-AT \cite{madry2017towards}. A variant that modifies the inner maximization problem to tradeoff clean accuracy for robust accuracy was proposed in TRADES \cite{zhang2019theoretically}. \blue{Further improvement with additional unlabelled data (RST) \cite{carmon2019unlabeled} has increased robustness of models on the competitive AutoAttack benchmark}, \textit{a.k.a.} RobustBench (an ensemble of four white-box and black-box attacks with a single hyperparameter - perturbation budget $\epsilon = \frac{8}{255}$) \cite{croce2020reliable_autoattack}. Employing our PoE-motivated learning rate schedule further increases the robustness of models trained with the above state-of-the-art (SOTA) AT frameworks, thereby proving its importance as a force multiplier for any training algorithm.

\textbf{Estimating Lipschitz Constant of Loss Gradient.} Several works have studied neural network Lipschitz constant estimation (\textit{e.g.} \cite{fazlyab2019efficient, weng2018evaluating}) but here, we are concerned with the Lipschitz constant of loss gradient (denoted $\Ls$). In \cite{herrera2020estimating}, approximate upper bounds were derived for $\Ls$ but to our best knowledge, no efficient estimation method has been previously proposed and implemented in practical SGD. In this work, we apply an extreme value theory approach \cite{wood1996estimation, weng2018evaluating} and estimate $\Ls$ in both standard and adversarial training.

\subsection{Contributions}
1) \blue{We propose extending PoE, with inspiration from MRAC, to neural network training to obtain sufficient conditions for convergence of GD dynamics for any model to its true parameters. Our insight into modeling GD as a sampling of an adaptive CT LTV system is vital to generalizing beyond the simple 2-layer network and certain loss functions in \cite{nar2019persistency}.}

2) \blue{We present an efficient implementation strategy in practical SGD training, based on extreme value theory, to obtain an estimate of the initial learning rate in a learning schedule for \PoE{}. We also detail a simple heuristic to tune batch size to satisfy a principal assumption in our derivation.}

3) We demonstrate the effectiveness of our approach in standard training with SGD on MNIST, CIFAR10, and CIFAR100 (up to \blue{$15$} points accuracy increase on 20-step, $\epsilon = 1/{255}$ PGD attack) \& with various SOTA adversarial trained CIFAR10 models on the competitive Autoattack benchmark (universal improvements of up to \blue{$0.7$} points with $\epsilon = 8/{255}$).

\section{PROBLEM FORMULATION} \label{sec:prob}
In this Section, we mention some preliminaries from \blue{adaptive control} \& GD and then formally state our problem.\\
\blue{\textbf{Adaptive Control Preliminaries:} For a continuous-time (CT) linear time-varying (LTV) system given by},
\begin{align}
    \Dot{z}(t) = -\Phi(t) \Phi(t)^T z(t), \;\; t \geq 0 \label{CT_LTV_sys}
\end{align}
with $z(t) \in \R^d, \; \Phi(t) \in \R^{d \times p}$, \blue{\PoE{} is defined as follows.}
\begin{definition}[\textbf{\PoE{}} \cite{sastry2011adaptive}] The signal $\Phi(t): \R^{\geq 0} \to \R^{d \times p}$ is \PE{} 
if there exists $\mu_1, \mu_2, T_0 > 0$ such that, 
\begin{align}
    \mu_2 \I \geq \int_{t}^{t+T_0} \Phi(s) \Phi(s)^{\top} ds \geq \mu_1 \I
\end{align}
where $\I$ is the $d \times d$ Identity matrix. \\
\textup{\PoE{} and GES are connected via the following lemma.}
\label{PoE_basic_defn}
\end{definition}

\begin{lemma}[\textbf{\PoE{} and GES} {\cite[Theorem 2.5.1]{sastry2011adaptive}}] 
If $\Phi(t)$ is piece-wise continuous and \PE{}, then system \eqref{CT_LTV_sys} is GES.
\label{GES_PE_lemma}
\end{lemma}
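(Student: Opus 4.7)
The plan is to establish GES via a quadratic Lyapunov argument that converts the integral PE inequality into exponential decay over fixed-length windows. I take $V(z) = \tfrac{1}{2}\|z\|^2$ and compute along trajectories of \eqref{CT_LTV_sys}:
\begin{equation*}
    \dot V(z(t)) \;=\; z(t)^\top \dot z(t) \;=\; -\,z(t)^\top \Phi(t)\Phi(t)^\top z(t) \;=\; -\,\|\Phi(t)^\top z(t)\|^2 \;\le\; 0.
\end{equation*}
This immediately gives uniform stability and monotone non-increase of $\|z(t)\|$. The goal is then to upgrade this to a strict contraction $V(z(t_0+T_0)) \le \alpha\, V(z(t_0))$ for some $\alpha \in (0,1)$ independent of $t_0$, which by iteration yields GES.

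Next I would integrate $\dot V$ across a PE window of length $T_0$:
\begin{equation*}
    V(z(t_0+T_0)) - V(z(t_0)) \;=\; -\int_{t_0}^{t_0+T_0} \|\Phi(s)^\top z(s)\|^2\, ds.
\end{equation*}
The PE lower bound directly controls the analogous integral with the frozen argument $z(t_0)$:
\begin{equation*}
    \int_{t_0}^{t_0+T_0} \|\Phi(s)^\top z(t_0)\|^2\, ds \;=\; z(t_0)^\top\!\left[\int_{t_0}^{t_0+T_0}\!\Phi(s)\Phi(s)^\top ds\right]\!z(t_0) \;\ge\; \mu_1 \|z(t_0)\|^2 .
\end{equation*}
The technical core is to bridge the two integrals, i.e.\ to argue that $z(s)$ does not drift far from $z(t_0)$ over $[t_0,t_0+T_0]$. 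I would write $z(t_0)-z(s) = \int_{t_0}^{s}\Phi(\tau)\Phi(\tau)^\top z(\tau)\,d\tau$, apply the triangle inequality together with $\|\Phi(s)^\top z(t_0)\|^2 \le 2\|\Phi(s)^\top z(s)\|^2 + 2\|\Phi(s)^\top(z(t_0)-z(s))\|^2$, and then use Cauchy--Schwarz together with the PE upper bound $\mu_2 I$ (which implies a uniform $L^2$-type bound on $\Phi$ over any $T_0$-window) to control the drift term by a constant multiple of $\sup_{s\in[t_0,t_0+T_0]}\|z(s)\|^2 = \|z(t_0)\|^2$ (the sup is attained at $t_0$ because $V$ is non-increasing). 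Combining these bounds gives $\int_{t_0}^{t_0+T_0}\|\Phi(s)^\top z(s)\|^2 ds \ge \gamma \|z(t_0)\|^2$ for some $\gamma = \gamma(\mu_1,\mu_2,T_0) > 0$.

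Plugging back into the Lyapunov difference yields $V(z(t_0+T_0)) \le (1-2\gamma) V(z(t_0))$; choosing $\mu_1$ small enough relative to the drift bound ensures $\alpha := 1-2\gamma \in (0,1)$. Iterating on the windows $[t_0+kT_0,\, t_0+(k+1)T_0]$ and interpolating within each window using $\dot V \le 0$ gives $\|z(t)\| \le \sqrt{\alpha^{\lfloor (t-t_0)/T_0\rfloor}}\,\|z(t_0)\|$, which is exactly the GES bound $\|z(t)\|\le M e^{-\lambda(t-t_0)}\|z(t_0)\|$ with $\lambda = -\tfrac{1}{2T_0}\ln\alpha$ and $M = \alpha^{-1/2}$. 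Piecewise continuity of $\Phi$ is used only to guarantee existence, uniqueness, and integrability on each window. The main obstacle, and the step that needs the most care, is the drift estimate in the middle paragraph: without the PE upper bound $\mu_2 I$ one cannot uniformly control $\int\!\|\Phi(s)^\top(z(t_0)-z(s))\|^2 ds$, so the two-sided PE inequality is genuinely used, not just the lower bound.
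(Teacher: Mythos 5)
The paper does not prove this lemma at all; it imports it verbatim as Theorem 2.5.1 of Sastry--Bodson, so the only meaningful comparison is to the standard textbook argument --- which is exactly the Lyapunov-plus-PE-window strategy you lay out. Your overall plan ($V=\tfrac12\|z\|^2$, $\dot V=-\|\Phi^\top z\|^2$, contraction over windows of length $T_0$, iteration to get the exponential envelope) is the right one, and you correctly identify that the two-sided PE bound is needed.

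However, one step as written does not close. You propose to control the drift term $\int_{t_0}^{t_0+T_0}\|\Phi(s)^\top(z(t_0)-z(s))\|^2\,ds$ by a constant multiple of $\|z(t_0)\|^2$. Any such constant is of order $\mu_2$ (e.g.\ $4d\mu_2\|z(t_0)\|^2$ via the crude bound $\|z(t_0)-z(s)\|\le 2\|z(t_0)\|$), so the combined inequality reads $\mu_1\|z(t_0)\|^2\le 2\Delta+2C\|z(t_0)\|^2$ with $\Delta=\int\|\Phi^\top z\|^2$, and it yields $\Delta\ge\gamma\|z(t_0)\|^2$ with $\gamma>0$ only when $\mu_1>2C\sim\mu_2$ --- which is impossible, since the PE definition forces $\mu_1\le\mu_2$. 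Your closing remark that one should ``choose $\mu_1$ small enough'' points the wrong way: shrinking $\mu_1$ is always permissible but only weakens the lower bound; the problem is that the drift constant is too large, not too small. The standard fix is already latent in your Cauchy--Schwarz step: stop at
\begin{equation*}
\|z(t_0)-z(s)\|^2\;\le\;\Bigl(\int_{t_0}^{s}\|\Phi(\tau)\|^2d\tau\Bigr)\Bigl(\int_{t_0}^{s}\|\Phi(\tau)^\top z(\tau)\|^2d\tau\Bigr)\;\le\; d\mu_2\,\Delta,
\end{equation*}
so that the drift term is bounded by $(d\mu_2)^2\Delta$, i.e.\ by the \emph{dissipated energy} rather than by $\|z(t_0)\|^2$. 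Then $\mu_1\|z(t_0)\|^2\le 2\bigl(1+(d\mu_2)^2\bigr)\Delta$ gives $\Delta\ge\gamma\|z(t_0)\|^2$ unconditionally, with $\gamma=\mu_1/\bigl(2(1+(d\mu_2)^2)\bigr)$, and one checks $\gamma<1$ automatically. With that single correction your argument is complete and is the classical proof of the cited result.
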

\vspace{-0.5em}
Lemma \ref{GES_PE_lemma} ensures GES convergence of states on \eqref{CT_LTV_sys} to their equilibrium and informs our definition of the persistently exciting reference family for GD updates to track.\\
\textbf{GD Preliminaries.}
We represent feature space with $\Xs$, label space with $\Ys$ and model with parameters $\Theta \in \Ps$ given by $h_{\Theta} : \Xs \to \Ys$. In the theoretical part of this work, we focus on model training with GD wherein, we represent the training data with $(X, Y) \in \Xs \times \Ys$ and the loss function minimized with $L: \Ys \times \Ys \to \R$. We denote the vectorized version of parameters as $\theta = \vect (\Theta) \in \R^d$, vectorized loss gradients as $\nabla l(\theta) = \vect \left( \nabla L(h_{\Theta}(X), Y) \right) \in \R^d$, and learning rate as $\eta$. Now, we write the vectorized GD update step below.

\begin{definition}[\textbf{Vectorized GD Update}] \label{GD_defn} The vectorized form of the $k$th update step in GD for training a model $h_{\Theta}$ on training data $(X, Y)$ by minimizing loss $L(h_{\Theta}(X), Y)$ with learning rate $\eta^k$ is given by
\begin{align}
    \theta^{k+1} = \theta^k - \eta^k \nabla l (\theta^k), \;\; k = 1,2,\ldots. \label{GD}
\end{align} 
\end{definition}
The above definition casts GD into a DT nonlinear time-varying (NLTV) system. 
Analyzing this system for a particular loss function $l$ and model architecture $h_{\theta}$ is intractable for increasingly larger models trained by minimizing custom loss functions. We propose a \blue{bottom-up} solution for this problem.

\blue{First, we choose a family of persistently excited CT dynamics such that all of its members converge exponentially-fast to the unknown true parameters (denoted $\theta^*$) of a model. In this work, we conjecture (and empirically demonstrate in Sections \ref{sec:implement}, \ref{sec:results}) that the true parameters coincide with the maximal $\epsilon$-robust optimum (where an $\epsilon$-robust optimum provides a model with the same output for all inputs in a ball of perturbation size $\epsilon$ around any input in domain $\Xs$).}

Second, we \blue{find sufficient conditions} to constrain consecutive states of the DT NLTV system in \eqref{GD} to lie on \blue{discretized trajectories from the aforementioned family. Through these two steps we obtain sufficient conditions for convergence of GD updates to the true parameters of the model. We tackle the first of the two steps below by choosing the following family of dynamics with} each member of the family, for $k\geq 1$, starting at the $k$th GD update ($\theta^k$).

\begin{definition}[\textbf{Reference Family of Persistently Exciting CT Systems}] 
The reference family of \PE{} CT systems, with GES equilibrium at $\theta^*$, that governs the evolution of a state vector $\Gamma(t) \in \R^d, \; t \geq k$ with initial value $\Gamma(k) = \theta^k$, is given by 
\begin{align}
\Dot{\Gamma}(t) &= - \Phi(t) \Phi(t)^{\top} (\Gamma(t) - \theta^*)
\label{PE_traj}
\end{align}
where $\Phi(t) = \Phi^k \; \forall \; t \in [k, k+1)$ is piece-wise constant matrix $\in \R^{d \times p}$ and $\Phi^k {\Phi^k}^{\top}$ is full rank $\forall \; k$.
\label{family_PE_defn}
\end{definition}
Equation \eqref{PE_traj} is a form of the CT LTV system in \eqref{CT_LTV_sys} with an equilibrium at $\theta^*$. Our choice of $\Phi(t)$ in the above definition ensures that $\Phi(t)$ is \PE{} and consequently that the system in \eqref{CT_LTV_sys} has GES equilibrium (Proved in Section \ref{sec:main}). 
Finally, with the requisite family defined, we formally state the problem considered by this paper below.

\begin{problem}[\textbf{\PoE{} of GD}]
We aim to find sufficient conditions for every pair of consecutive $k$th-step GD updates $\theta^k, \theta^{k+1}$ (Definition \ref{GD_defn}) to lie on a \blue{discretized} trajectory from the reference persistently excited \blue{CT} family in Definition \ref{family_PE_defn}.

\textup{\blue{We remark that any mention of '\PoE{} of GD' in this work denotes the above desired property of the GD dynamics.}}
\label{ps}
\end{problem}
\section{MAIN THEORETICAL RESULTS ON LEARNING RATES FOR PoE OF GD} \label{sec:main}


In this section, we present our main theoretical result in Theorem \ref{main_theorem} which proposes an upper bound on learning rates $\eta^k, \; k \geq 1$ to accomplish our problem statement. Also, we discuss the proven sufficient condition and its connections to convex optimization \& constant learning rate training. Before stating Theorem \ref{main_theorem}, we present our assumptions on the $\Ls-$smoothness of loss and the acuteness of descent directions below.

\begin{assumption}[\textbf{$\mathbf{\Ls-}$Smooth Loss Function}]
The loss function is $\Ls-$smooth if its gradient $\nabla l: \R^d \to \R^d$ is $\Ls-$Lipschitz, \textit{i.e.} there exists a constant $\Ls > 0$ such that
\begin{align}
    \forall \; \theta_1, \theta_2 \in \R^d, \;\; \|\nabla l(\theta_2) - \nabla l(\theta_1)\|_2 \leq \Ls \|\theta_2 - \theta_1\|_2.
\end{align}
\textup{$\Ls$ is also called the second-order Lipschitz constant. $\Ls$-smoothness is a commonly recurring assumption in optimization theory \cite{nesterov2018lectures, bertsekas1997nonlinear}. In practice, standard and adversarial losses are smooth for certain models \cite{li2017visualizing, wu2020adversarial} and not others. Yet, in Section \ref{sec:results}, our approach results in increased robustness for a wide variety of architectures.}
\label{main_assumption}
\end{assumption}

\begin{assumption}[\blue{\textbf{Acuteness of descent directions}}]
\blue{The angle between the $k$th descent direction and the next true descent direction (from $(k+1)$th update to true parameters) is acute.}
\begin{align}
    \blue{\textit{i.e.} \;\; (\theta^k - \theta^{k+1})^{\top}(\theta^{k+1} - \theta^{*}) \geq 0 \label{eq:acute_assumption}}
\end{align}
\textup{\blue{Assumption \ref{acute_assumption} states that the local gradient and true gradient are acute, an intuitive property of GD with training data that is representative of the population. Further, we monitor this assumption in our experiments in Section \ref{sec:implement} and observe that it is indeed satisfied throughout model training with large batch SGD and with GD (full-batch SGD)}. Thus, leveraging Assumptions \ref{main_assumption}, \ref{acute_assumption}, we state the main theorem below.}
\label{acute_assumption}
\end{assumption}
\begin{theorem}[\textbf{Sufficient Conditions for \PoE{} of GD}]
\blue{Consider} a model trained via GD with a learning rate schedule given by $\left(\eta^k\right)_{k\geq 1}$ by minimizing a $\Ls-$smooth loss (Assumption \ref{main_assumption}) \blue{and satisfying Assumption \ref{acute_assumption}}. We have PoE of GD \blue{and hence convergence of GD updates to the model's true parameters} if $\eta^k < 1/\Ls$ for all $k$.
\label{main_theorem}
\end{theorem}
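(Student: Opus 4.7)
The plan is to rewrite the PoE-of-GD requirement as a per-step algebraic existence question about a matrix $A^k := \Phi^k {\Phi^k}^\top$. Because $\Phi(t)=\Phi^k$ is held constant on $[k,k+1)$, the CT system in Definition \ref{family_PE_defn} is a constant-coefficient linear ODE whose exact one-step solution reads $\theta^{k+1}-\theta^* = e^{-A^k}(\theta^k-\theta^*)$. Matching this against the GD update in Definition \ref{GD_defn} reduces Problem \ref{ps} to showing that at every $k$ there exists a symmetric positive-definite $A^k$ satisfying
\begin{equation*}
(I - e^{-A^k})\,v^k \;=\; u, \qquad v^k := \theta^k-\theta^*, \quad u := \eta^k\nabla l(\theta^k).
\end{equation*}
Recovering $\Phi^k=(A^k)^{1/2}$ then yields a full-rank factorization, which certifies PoE on the interval $[k,k+1)$.

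Next I would extract the geometry implied by the two assumptions. Setting $M := I - e^{-A^k}$, I need $M$ symmetric with spectrum in $(0,1)$ such that $Mv^k = u$. From Assumption \ref{main_assumption} and $\nabla l(\theta^*)=0$, one gets $\|u\|\leq \eta^k\Ls\|v^k\| < \|v^k\|$ whenever $\eta^k<1/\Ls$. Decomposing $v^{k+1}=\alpha v^k+w$ with $w\perp v^k$ and $\beta := \|w\|/\|v^k\|$, direct calculation yields $v^k\cdot u=(1-\alpha)\|v^k\|^2$ and $\|u\|^2=((1-\alpha)^2+\beta^2)\|v^k\|^2$; Assumption \ref{acute_assumption} rearranges to $v^k\cdot u\geq \|u\|^2$, equivalent to $\alpha(1-\alpha)\geq\beta^2$ and pinning $\alpha\in(0,1)$. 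I would then construct $M$ explicitly on $V:=\mathrm{span}(v^k,w)$ by writing $M|_V=\bigl(\begin{smallmatrix}\alpha & \beta\\ \beta & m\end{smallmatrix}\bigr)$ in the basis $(v^k/\|v^k\|,\,w/\|w\|)$; requiring $M|_V$ to be PD with spectrum in $(0,1)$ amounts to choosing $m\in\bigl(\beta^2/\alpha,\;1-\beta^2/(1-\alpha)\bigr)$, which is non-empty precisely when $\alpha(1-\alpha)>\beta^2$. Extending $M$ by any scalar in $(0,1)$ on $V^\perp$ and taking $A^k:=-\log M$ produces the desired symmetric PD operator.

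With $A^k$ in hand, piecewise-constancy of $\Phi(t)$ makes it piecewise continuous and the strictly positive spectrum of $\Phi^k{\Phi^k}^\top$ certifies persistent excitation over each $[k,k+1)$; Lemma \ref{GES_PE_lemma} then gives GES of the reference dynamics to $\theta^*$, and since $(\theta^k,\theta^{k+1})$ lies by construction on a unit-step discretization of such dynamics, the GD iterates inherit exponential convergence to $\theta^*$. I expect the main obstacle to be the boundary case of the construction step, where $u\cdot v^{k+1}=0$, Assumption \ref{acute_assumption} is tight ($\beta^2=\alpha(1-\alpha)$), and the admissible interval for $m$ collapses to a singleton that drives an eigenvalue of $M$ to $1$ and blows up $A^k=-\log M$. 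I would handle this either by strengthening Assumption \ref{acute_assumption} to a strict inequality (generic in practice and consistent with the authors' empirical monitoring) or by perturbing $A^k$ infinitesimally along $V^\perp$ and absorbing the residual discrepancy into the following step; a second, minor technicality is securing uniform lower/upper bounds on the spectrum of $A^k$ across $k$ so that the PE integral bounds $\mu_1,\mu_2$ in Definition \ref{PoE_basic_defn} hold globally, which follows from bounding $\alpha$ away from $1$ on the non-converged portion of the trajectory and using the freedom on $V^\perp$.
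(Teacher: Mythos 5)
Your overall route is the same as the paper's: both arguments reverse-engineer Problem \ref{ps} into the existence, at each step $k$, of a symmetric matrix $M^k$ with spectrum in $(0,1)$ satisfying $M^k(\theta^k-\theta^*)=\theta^k-\theta^{k+1}$, identify $M^k=\I-e^{-\Phi^k{\Phi^k}^{\top}}$ so that consecutive iterates sit on the exact one-step discretization of \eqref{PE_traj}, and read off persistency of excitation from positive definiteness of $\Phi^k{\Phi^k}^{\top}$. Where you genuinely differ is in how $M^k$ is built: the paper rotates to an orthonormal basis whose first vector is a $\delta$-perturbation of $\theta^k-\theta^{k+1}$, takes $M^k=U\Sigma U^{\top}$ diagonal in that basis, and sends $\delta\to 0$, whereas you write $M^k$ as an explicit symmetric $2\times 2$ block on $\mathrm{span}(v^k,w)$ and impose Sylvester-type conditions on $M^k$ and $\I-M^k$, obtaining the clean criterion that the admissible interval for the free entry $m$ is non-empty iff $\alpha(1-\alpha)>\beta^2$, i.e.\ iff Assumption \ref{acute_assumption} holds \emph{strictly}. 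This buys transparency: the paper only establishes $a_1\le b_1$ (non-strict) from \eqref{new_acute} yet needs $\Sigma<\I$ strictly, so it has exactly the boundary degeneracy you flag but does not acknowledge it; your proposed fixes (strict acuteness, or perturbing on $V^{\perp}$ and absorbing the residual) are sensible. You also correctly observe that global PE in the sense of Definition \ref{PoE_basic_defn} requires the spectrum of $\Phi^k{\Phi^k}^{\top}$ to be bounded away from $0$ and $\infty$ uniformly in $k$, whereas the paper verifies excitation only interval-by-interval with $k$-dependent eigenvalue bounds. One cosmetic slip: with your decomposition $v^{k+1}=\alpha v^k+w$ the first column of $M^k|_V$ in the basis $(v^k/\|v^k\|,\,w/\|w\|)$ should be $(1-\alpha,\,-\beta)$ rather than $(\alpha,\,\beta)$; since the non-emptiness condition is symmetric under $\alpha\leftrightarrow 1-\alpha$, your conclusion is unaffected.
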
%
\begin{proof}
We begin with our sufficient condition: $\eta^k <  \frac{1}{\mathcal{L}}$ which can be rewritten as $\mathcal{L} <  \frac{1}{\eta^k}$ such that,
\begin{align}
    \frac{\|\nabla l(\theta^k) - \nabla l(\theta^*) \|_2}{\|\theta^k - \theta^*\|_2} &< \frac{1}{\eta^k} \label{desired_from_PE}
\end{align}
Observing the gradient at the optima is zero in \eqref{desired_from_PE}, i.e., $\nabla l(\theta^*)=0$, we have, $\frac{\|\nabla l(\theta^k) - 0 \|_2}{\|\theta^k - \theta^*\|_2} < \frac{1}{\eta^k} \iff \eta^k \|\nabla l(\theta^k)\|_2 < \|\theta^k - \theta^*\|_2$. By substituting \eqref{GD}, we have,
\begin{align}
    \|\theta^{k} - \theta^{k+1}\|_2 &< \|\theta^k - \theta^*\|_2 \; \nonumber
    \\
    \iff \|U^{\top} (\theta^k - \theta^{k+1})\|_2 &< \|U^{\top} (\theta^k - \theta^{*})\|_2 \; \forall \; U \in \blue{SO(d)} \nonumber
\end{align}
where $\blue{SO(d)}$ is the set of orthonormal rotation matrices in $d$-dimensions (since rotated vectors maintain their magnitudes). Now, choosing $U = [v_1, v_2, v_3, ..., v_d], \; v_i^{\top} v_j = 0, \; {\|v_i\|}_2 = 1$ where (for infitesimally small $\delta>0$),
\blue{\begin{align}
    v_1 &= \frac{(\theta^k - \theta^{k+1}) - \delta (\theta^k - \theta^{*})}{\|(\theta^k - \theta^{k+1}) - \delta (\theta^k - \theta^{*})\|_2} \label{v1_formula}\\
    v_2^{\top} v_1 =0 &\implies  v_2^{\top} (\theta^k - \theta^{k+1}) - \delta v_2^{\top} (\theta^k - \theta^{*}) = 0 \label{v1_v2_reln}
\end{align}}
\blue{then we can write, $(\theta^k - \theta^{k+1}) = a_1 v_1 + a_2 v_2$ and $(\theta^k - \theta^*) = b_1 v_1 + b_2 v_2$ for constants $a_1, a_2, b_1, b_2$ as follows.}
\blue{\begin{align}
    a_1 &= v_1^{\top} (\theta^k - \theta^{k+1}) \nonumber 
    \\
    &= \frac{\|\theta^k - \theta^{k+1}\|_2^2 - \delta (\theta^k - \theta^{k+1})^{\top} (\theta^k - \theta^{*})}{\|(\theta^k - \theta^{k+1}) - \delta (\theta^k - \theta^{*})\|_2} \label{a_1}
    \\
    b_1 &= v_1^{\top} (\theta^k - \theta^*) \nonumber 
    \\
    &= \frac{(\theta^k - \theta^{k+1})^{\top} (\theta^k - \theta^{*}) -\delta \|\theta^k - \theta^{*}\|_2^2}{\|(\theta^k - \theta^{k+1}) - \delta (\theta^k - \theta^{*})\|_2} \label{b_1}
    \\
    a_2 &= v_2^{\top} (\theta^k - \theta^{k+1}) = \delta v_2^{\top} (\theta^k - \theta^{*}) \;\; (\text{via}\;\eqref{v1_v2_reln}) \label{a_2}
    \\
    b_2 &= v_2^{\top} (\theta^k - \theta^{*}) \label{b_2}
\end{align}}
\blue{From Assumption \ref{acute_assumption}, we have}
\blue{\begin{align}
    \frac{(\theta^k - \theta^{k+1})^{\top}(\theta^k - \theta^*)}{\|\theta^k - \theta^{k+1}\|^2_2} &= 1 + \frac{(\theta^k - \theta^{k+1})^{\top} (\theta^{k+1} -  \theta^{*} )}{\|\theta^k - \theta^{k+1}\|^2_2} \nonumber
    \\
    &\geq 1 \;\;\;(\text{from}\; \eqref{eq:acute_assumption}) \nonumber
    \\
    \implies (\theta^k - \theta^{k+1})^{\top}&(\theta^k - \theta^*) \geq \|\theta^k - \theta^{k+1}\|^2_2. \label{new_acute}
\end{align}}
\blue{Therefore, in the limit of $\delta \to 0$, we have $b_1 \geq a_1 > 0$ (from \eqref{a_1}, \eqref{b_1},  \eqref{new_acute}) and $b_2 > a_2 \to 0^{+}$. The latter is because as $\delta \to 0$, we have $v_1$ lying along $(\theta^k - \theta^{k+1})$ which means, from \eqref{new_acute}, $v_1$ and $(\theta^k - \theta^{*})$ are acute. This in turn implies $v_2$ and $(\theta^k - \theta^{*})$ are acute and hence from \eqref{b_2}, $b_2 > 0$ and from \eqref{a_2}, $a_2 \to 0$ from the positive side.}

\blue{
Thus, for the above choice of $U$, in the limit of $\delta \to 0$, we have,
$
    U^{\top} (\theta^k - \theta^{k+1}) 
    = [v_1^{\top}, v_2^{\top}, \ldots, v_d^{\top}]^{\top} (a_1 v_1 + a_2 v_2) 
    = [a_1, a_2, 0, \ldots, 0 ]^{\top}
    \leq [ b_1, b_2, 0, \ldots, 0 ]^{\top} 
    = [ v_1^{\top}, v_2^{\top}, \ldots, v_d^{\top}]^{\top} (b_1 v_1 + b_2 v_2)
    = U^{\top} (\theta^k - \theta^{*})
$.}

\blue{
Continuing in the limit of $\delta \to 0$} and choosing $\Sigma = \text{diag}\left( \frac{a_1}{b_1}, \frac{a_2}{b_2}, c_3, \ldots, c_d \right)$ where $0 < c_i < 1$ for $i=3,\ldots, d$ (note $0 < \Sigma < \I$), we can scale down the right hand side vector above to match the left hand side vector as follows,
\begin{align}
    U^{\top} (\theta^k - \theta^{k+1})&= \Sigma U^{\top} (\theta^k - \theta^{*}) \nonumber
    \\
    \iff (\theta^k - \theta^{k+1})&= U \Sigma U^{\top} (\theta^k - \theta^{*}) \nonumber
    \\
    \iff (\theta^k - \theta^{k+1})&= (\mathbf{I} - e^{-\Phi^k {\Phi^k}^{\top}}) (\theta^k - \theta^{*}) \label{to_rewrite}
\end{align}
Since $\Phi^k {\Phi^k}^{\top}$ is full rank and we observe $(\mathbf{I} - e^{-\Phi^k {\Phi^k}^{\top}}) = V D V^{\top}$ with $V \in \blue{SO(d)}$ and diagonal $0< D < \mathbf{I}$, we can choose $\Phi^k$ such that $V=U$ and $D = \Sigma$. Rewriting \eqref{to_rewrite},
\begin{align}
    \theta^{k+1} - \theta^* &= e^{-\Phi^k {\Phi^k}^{\top}} (\theta^k - \theta^*). \label{discrete_fit}
\end{align}
Since \eqref{discrete_fit} is equivalent to the discretization of the CT dynamics of system \eqref{PE_traj} in time interval $[k,k+1]$ with $\Gamma(k+1)=\theta^{k+1}$ and initial value $\Gamma(k)=\theta^{k}$, we have proven that $\theta^k, \theta^{k+1}$ lie on the discretized trajectory of said system from the reference family of Definition \ref{family_PE_defn}. 

Finally, since $\Phi^k {\Phi^k}^{\top}$ is full rank and $\Phi^k {\Phi^k}^{\top}$ is positive definite, the system from \eqref{PE_traj} in time interval $[k,k+1]$ given by $\dot{\Gamma}(t) = - \Phi^k {\Phi^k}^{\top} (\Gamma(t) - \theta^*), t \geq k$
is persistently excited (since for any $T>0$, we have $\int_{t}^{t+T} \Phi^k {\Phi^k}^{\top} ds = \Phi^k {\Phi^k}^{\top} T$ and $0 < \lambda_{\min} T \leq \Phi^k {\Phi^k}^{\top} T \leq \lambda_{\max} T$ where $\lambda_{\min}, \lambda_{\max}$ are the minimum and maximum eigenvalues of positive definite $\Phi^k {\Phi^k}^{\top}$) and has GES equilibrium at true optimum $\theta^*$. 
\end{proof}

\blue{Next, we provide some discussions on the main result.}
\\
\textbf{Remark 1. On sufficient conditions for PoE and the conservative upper bound of $1/\Ls$.} 
It is worth noting that the proof has two sufficient conditions: full rankness of $\Phi^k {\Phi^k}^{\top}$ which is sufficient but not necessary for \PoE{} and $\eta^k < 1/\Ls$ which is sufficient but not necessary for Inequality \eqref{desired_from_PE} to hold. With these 2 sufficient conditions, our upper bound $1/\Ls$ is conservative and values greater than it may also ensure \PoE{}. In fact, inspired by empirical successes in Sections \ref{sec:implement}, \ref{sec:results} we later conjecture that an upper bound of $2/\Ls$ may ensure \PoE{}. Further, a necessary \& sufficient condition for \PoE{} instead of the first sufficient condition above may provide a larger upper bound and is an interesting open problem.\\
\textbf{Remark 2. On the connection to convex optimization.} \\
In GD on a $\Ls-$smooth and convex loss function, a learning rate choice of $\eta^k \leq \frac{1}{\Ls}$ guarantees monotonic progress to the minima \cite{nesterov2018lectures}. Our similar result is expected because every \PE{} trajectory, on which states converge exponentially fast to minima $\theta^*$, is a convex shortcut from $\theta^k$ to $\theta^*$ through $\theta^{k+1}$ that may/may not lie on the loss surface. 
This relationship provides an interesting intuition for our approach and strengthens its validity.
\\
\textbf{Remark 3. On training with a constant learning rate and GD vs SGD.} In GD on any $\Ls-$smooth differentiable loss function with a fixed learning rate $\eta$, the algorithm converges to local minima if $\eta < \frac{2}{L}$ \cite{bertsekas1997nonlinear}. Thus, if a model converges via GD with a fixed learning rate, halving it should be adequate for \PoE{}. Unfortunately the simplicity of dividing by 2 does not always work in practice because modern neural network training algorithms, faced with GPU constraints, use SGD rather than GD and learning schedules rather than a constant learning rate for which this holds. We discuss this gap ahead.
\section{IMPLEMENTATION IN SGD TRAINING} \label{sec:implement}

In practice, models are trained with SGD and its variants \cite{nesterov2018lectures} rather than GD where a slowly-decaying learning rate schedule (such as a sequence obeying $\sum_k \eta^k = \infty, \; \sum_k {\eta^k}^2 < \infty$ \cite{bottou2018optimization}) is often necessary for training to converge in the first place. \blue{Thus, in this Section, we present an implementation strategy for SGD based training that satisfies Theorem \ref{main_theorem}, Assumption \ref{acute_assumption} and actually converges.}

\subsection{PoE-motivated learning rate schedules for SGD} \label{subsec:PoE_schedules}
We assume a baseline learning rate schedule, $(\gamma^k)_{k \geq 1}$ with $\gamma^{k}\leq \gamma^1$ $\forall$ $k$, that ensures convergence to a local optima. Our \textit{PoE-motivated learning rate schedule} starts at $\eta^1 = \frac{1}{\Ls_{\text{est}}}$ and is subsequently scaled in the same way as the baseline schedule, \textit{i.e.} $\eta^k = \eta^1 \frac{\gamma^k}{\gamma^1}$ $\forall$ $k \geq 2$. Our algorithm for obtaining $\Ls_{\text{est}}$ (see Section \ref{subsec:Lips_esti}) always provides an estimate larger than the true value \cite{wood1996estimation, weng2018evaluating} ensuring that $\eta^1 = \frac{1}{\Ls_{\text{est}}} < \frac{1}{\Ls_{\text{true}}}$ and since $\eta^k \leq \eta^1$ $\forall$ $k \geq 2$, Theorem \ref{main_theorem} is satisfied. Further, by following a similar annealing cycle as the baseline schedule, we have convergence in practice. Lastly, following Remark 1, we analyze another schedule, \textit{a.k.a.} \textit{largest convergent schedule}, where $\eta^1=2/\Ls_{\text{est}}$ and we similarly scale subsequent values $\eta^k = \eta^1 \frac{\gamma^k}{\gamma^1}$ $\forall$ $k \geq 2$. This too ensures convergence in practice and we later conjecture in Section \ref{sec:conclusion} that it also leads to \PoE{}. Figure \ref{example_lrs} shows an example of these schedules for typical annealing strategies.

Our choice of initial learn rate $\eta^1 = \frac{1}{\Ls_{\text{est}}}$ which is close to Theorem \ref{main_theorem}'s upper bound stems from an experimental analysis of adversarial accuracy versus learning rate. We trained LeNet5 models \cite{lecun1998gradient} (with ReLU/Tanh activations) with various constant learning rates for 10 epochs via SGD on MNIST. Each trained model is evaluated against a 40-step PGD attack with $\epsilon=0.3$. Plotting the clean and PGD attack accuracy in Figure \ref{mnist}, we see that PGD attack accuracy peaks near the largest learning rate at which the model converges to a local optima (\textit{i.e.} when clean accuracy is close to 100\%), theoretically given by $\frac{2}{\Ls}$ \cite{bertsekas1997nonlinear}. Also, at half this point (our upper bound of $\frac{1}{\Ls}$), we notice PGD attack accuracy is still high but drops immediately on the left. This drop can be explained by the ill-conditioning of $||\nabla l (\theta^k)||/||\theta^k-\theta^*||$ term in \eqref{desired_from_PE} where the denominator is small for small learning rates. Moreover, since it is hard to predict the exact drop point, we stay close to the upper bound and use $\frac{1}{\Ls_{\text{est}}}$ \& $\frac{2}{\Ls_{\text{est}}}$. 

\begin{figure}[t]
\centering
\vspace{0.5em}
\includegraphics[width=\linewidth]{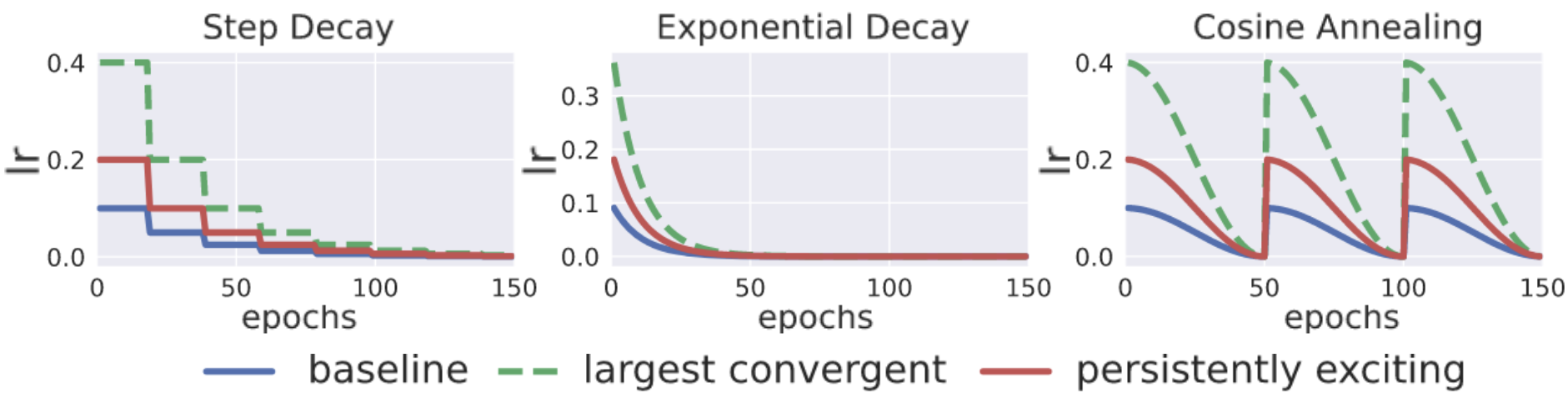}
\vspace{-1.5em}
\caption{An \textit{e.g.} of baseline ($\eta^1=0.1$), largest convergent ($\eta^1=2/\Ls$) \& PoE-motivated ($\eta^1=1/\Ls$) learning rate (lr) schedules for step decay, exponential decay \& cosine annealing strategies.} 
\label{example_lrs}
\vspace{-0.75em}
\end{figure}

\begin{figure}[t]
\centering
\includegraphics[width=.47\linewidth]{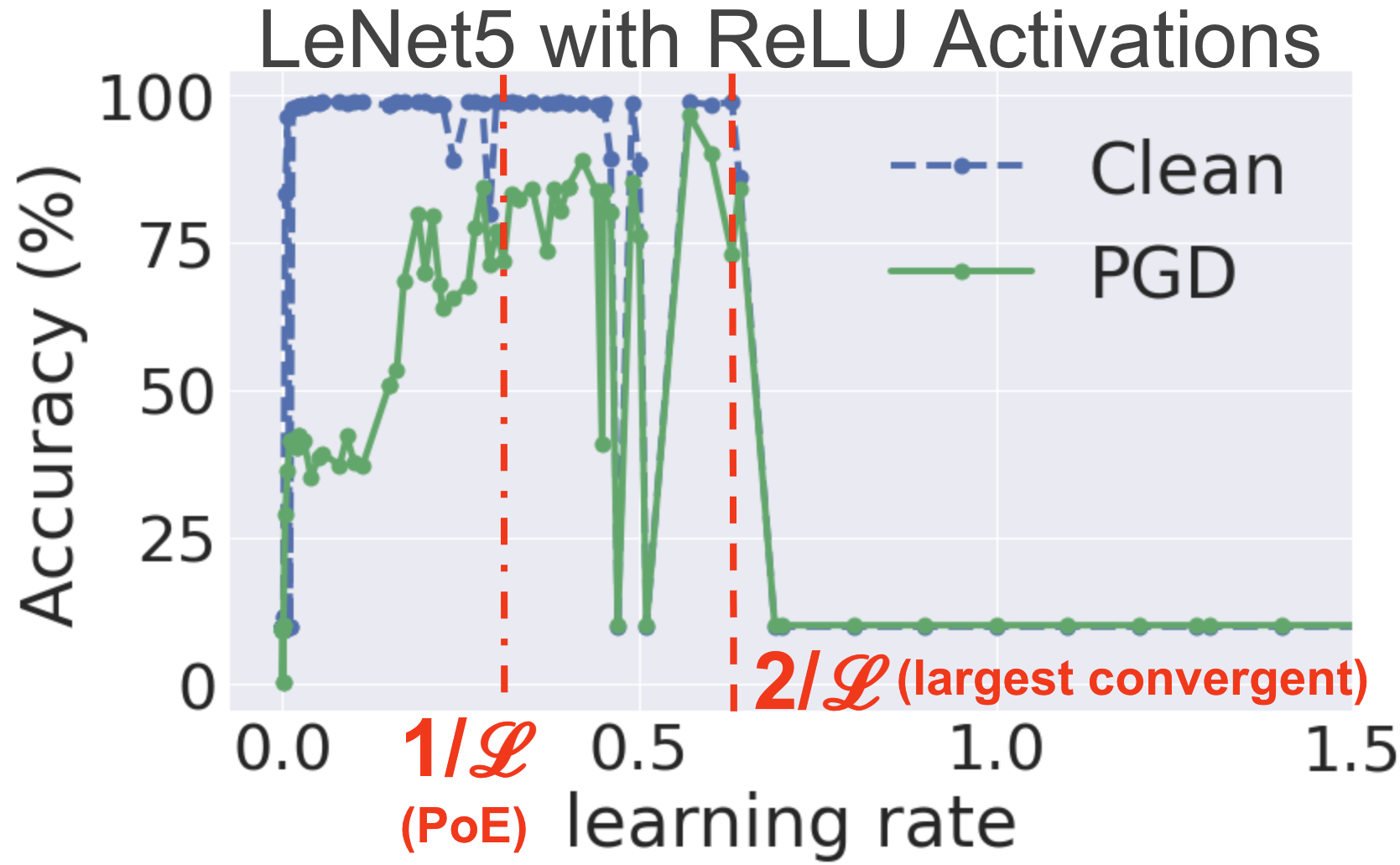}
\hfill
\includegraphics[width=.51\linewidth]{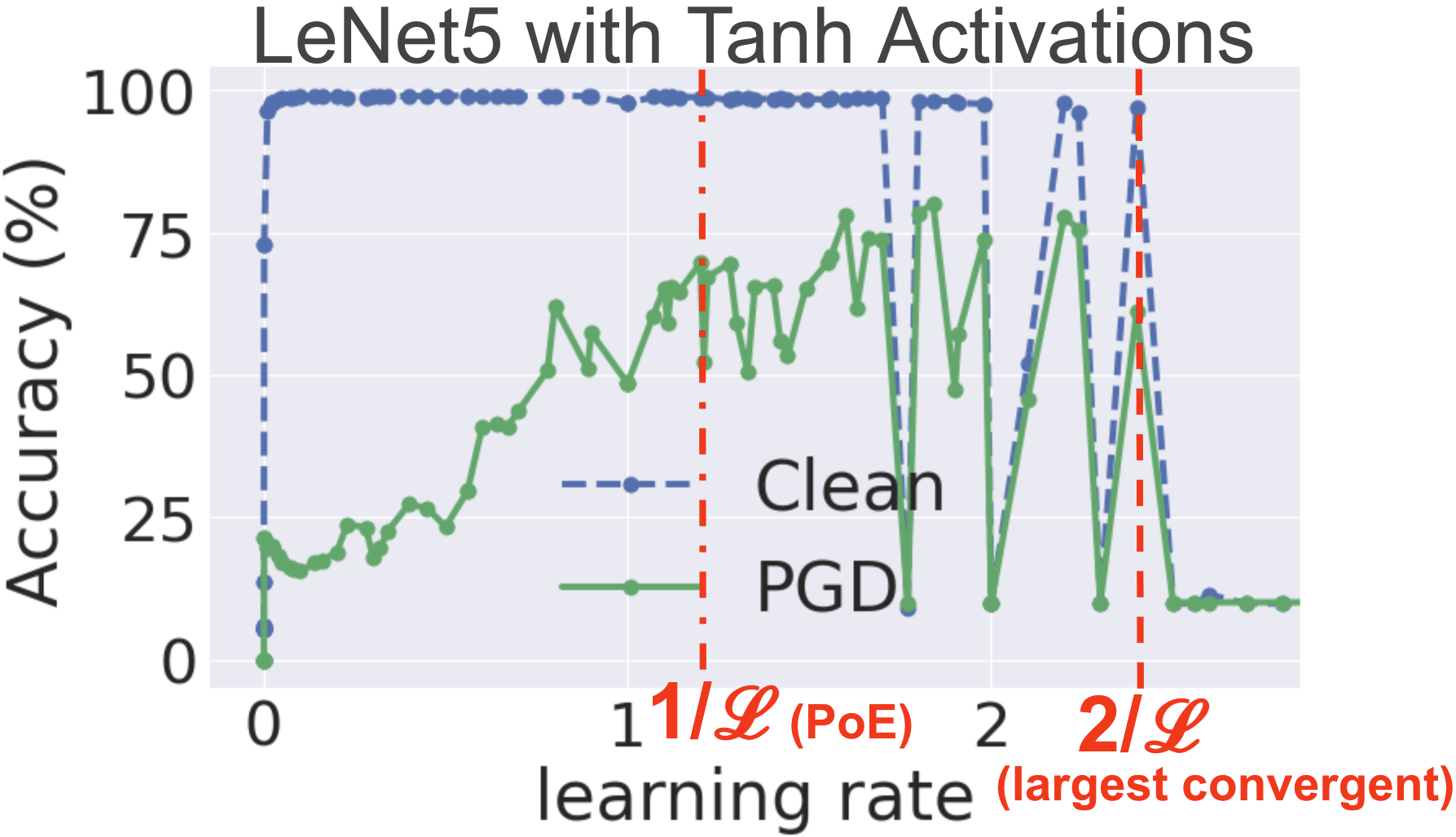}
\vspace{-1.5em}
\caption{Accuracy on Clean and PGD attacked MNIST validation set for a LeNet5 model (with ReLU [left] and Tanh [right] activations) vs constant learning rate (lr) used in training. The largest convergent and the \PoE{}-motivated lr's have higher PGD accuracy than baseline lr $= 0.1$.} \label{mnist}
\vspace{-1.5em}
\end{figure}

\vspace{-0.25em}
\subsection{Estimation of Certified Lipschitz Constant \texorpdfstring{$\Ls_{\text{est}}$}{L}} \label{subsec:Lips_esti}
Inspired by \cite{wood1996estimation, weng2018evaluating}, we estimate $\Ls$ with three steps:\\
\textbf{(1)} We collect average loss gradient and model parameters after each epoch in baseline training (\textit{i.e.} with the baseline schedule). They're denoted by $(\nabla l(\theta^{i})$, $\theta^i)_{1, \ldots, N_{\text{epochs}}}$.\\
\textbf{(2)} We estimate a Lipschitz constant by sampling $N$ points, computing $N/2$ slopes between consecutive pairs as 
$
            s_i = \frac{||\nabla l(\theta^{i+1}) - \nabla l (\theta^i)||_2}{||\theta^{i+1} - \theta^i||_2}
$, $i=1, 3, 5, ..., N$ and finding the maximum, $l = \max \{s_1, s_3, s_5, ..., s_{N}\}$. We repeat this $M$ times and applying the Fisher–Tippett–Gnedenko theorem \cite{wood1996estimation}, fit a 3 parameter (shape, location, scale) reverse Weibull distribution to $\{l_1, ..., l_M\}$ given an initial shape value. The fitted scale parameter is the desired estimate of Lipschitz constant.\\
\textbf{(3)} We certify our estimated Lipschitz constant by iterating between Step (2) and a Kolmogorov–Smirnov (K-S) test to test that our samples $\{l_1, ..., l_M\}$ are drawn from a reverse Weibull distribution with the Step (2)'s fitted parameters. Out of various p-values obtained, we choose the Lipschitz constant (\textit{i.e.} scale parameter) with the highest p-value. However, a question remains, how are hyperparameters $M, N$ tuned?\\
\textbf{A heuristic for hyper-parameter (M, N) tuning.} We repeat steps (2), (3) for different $M$, $N$ and choose the Lipschitz constant from the case when atleast one p-value is both larger and smaller than a mid-to-large significance value $\alpha=0.4$ to $0.6$. This heuristic works well in practice. Using the above, we estimate $\Ls_{\text{est}}$ for ResNet20 standard training (\textit{i.e.} minimizing cross-entropy loss on clean images) with $\alpha=0.55, M=200, N=100$. Plotting clean \& PGD accuracy in Figure \ref{opt1_only_plot}, we observe that models trained with the PoE-motivated and largest convergent schedules are more robust than the baseline while matching its clean accuracy.\\
\textbf{Comparison to grid search: }By having to train a baseline model to estimate $\Ls$ before training with a \PoE{} schedule, we have a $2\times$ increase in training time. Grid search, on the other hand, is not theoretically motivated and is unlikely to obtain an optimal learning rate schedule in just 2 training rounds. Thus, our PoE-motivated approach is the clear winner.

\begin{figure}[t]
	\centering
	\vspace{0.5em}
	\includegraphics[width=\linewidth]{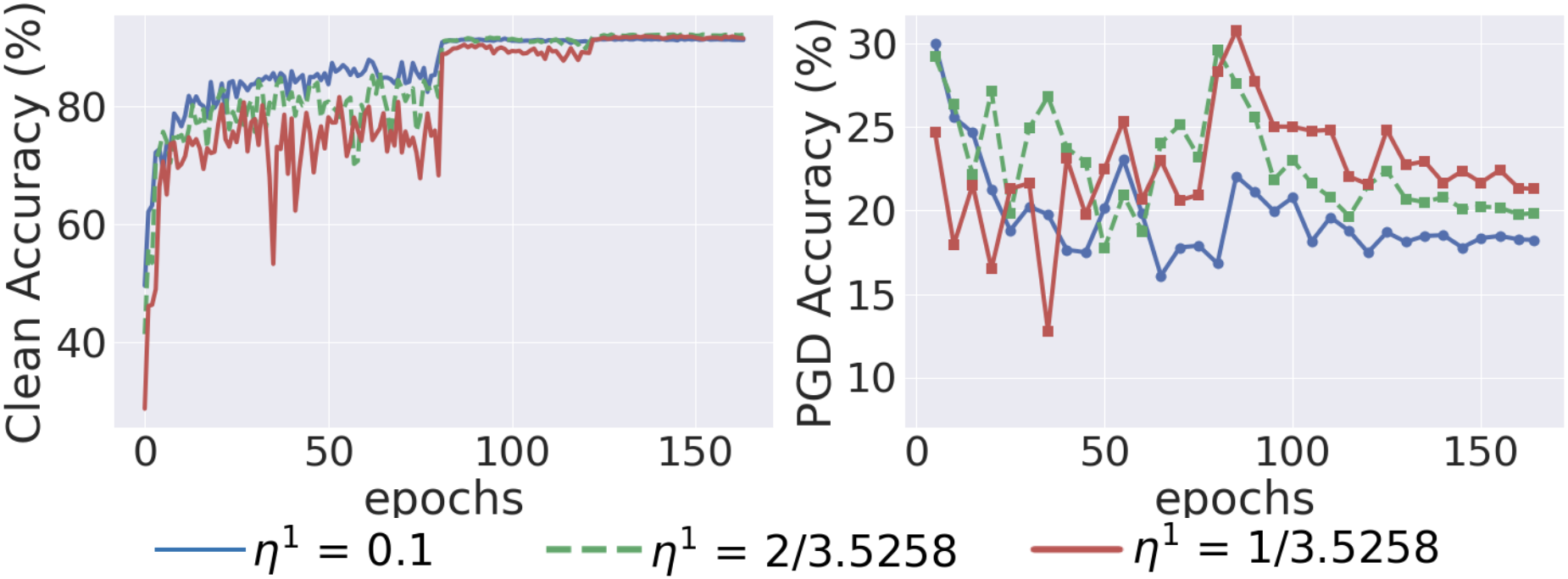}
	\vspace{-.25em}
	\begin{minipage}{0.49\linewidth}
	\centering
	\footnotesize{(a) Clean accuracy vs epochs}
	\end{minipage}
	\begin{minipage}{0.49\linewidth}
	\centering
	\footnotesize{(b) PGD attack accuracy vs epochs}
	\end{minipage}
	\vspace{-1.75em}
	\caption{Lipschitz constant estimated with extreme value theory for ResNet 20 standard training is $3.5258$. Training with \PoE{}-motivated ($\eta^1=1/{\Ls_{\text{est}}}$) and largest convergent ($\eta^1=2/{\Ls_{\text{est}}}$) schedules consistently increases PGD attack accuracy while matching clean accuracy of baseline (epochs 80-end).}
	\label{opt1_only_plot}
	\vspace{-0.75em}
\end{figure}

\subsection{\blue{Batch size selection for Assumption \ref{acute_assumption}}} \label{subsec:batch}
\blue{Following the discussion post Assumption \ref{acute_assumption} and with the parameters saved every epoch in Section \ref{subsec:Lips_esti}, we monitor Inequality \eqref{eq:acute_assumption} in ResNet20, 50 standard training on CIFAR10 with $\theta^*$ set to the final parameters. Plotting the fraction of total epochs in which it holds against batch size in Figure \ref{fig:acute_assumption_monitor}, we see that it is indeed satisfied for all epochs when trained with large batch sizes but faces GPU memory constraints \& clean accuracy degradation \cite{keskar2016large_batch} with said large batch sizes. For the tradeoff, we find that a simple heuristic of starting with the smallest batch size at which Assumption \ref{acute_assumption} holds \& decreasing until a clean accuracy threshold (here, $0.8$ on CIFAR10, $0.6$ on CIFAR100) is reached, works in practice.}

\begin{figure}[t]
	\centering
	\includegraphics[width=0.50\linewidth]{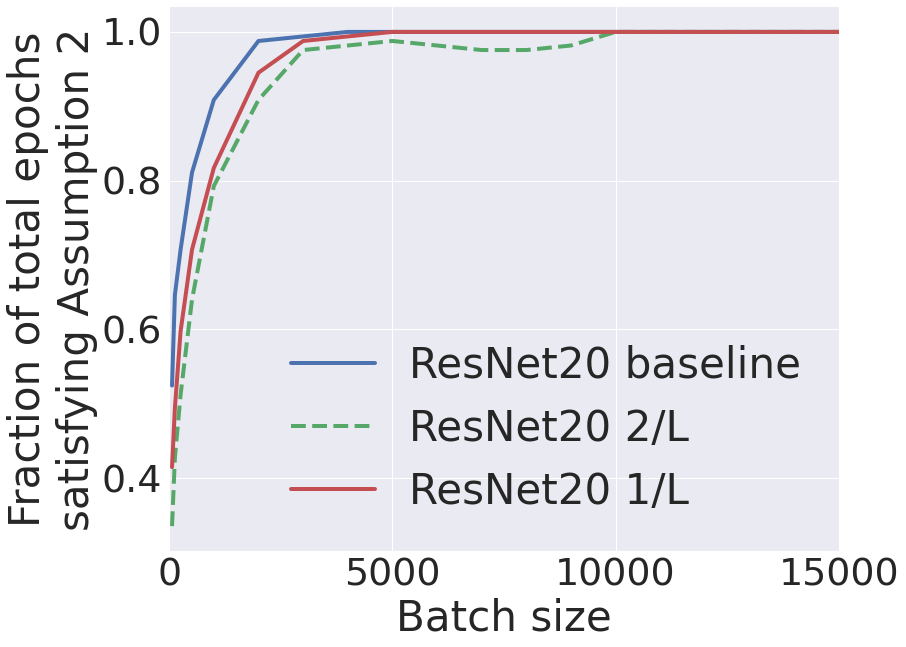}
    \hfill 
	\includegraphics[width=0.48\linewidth]{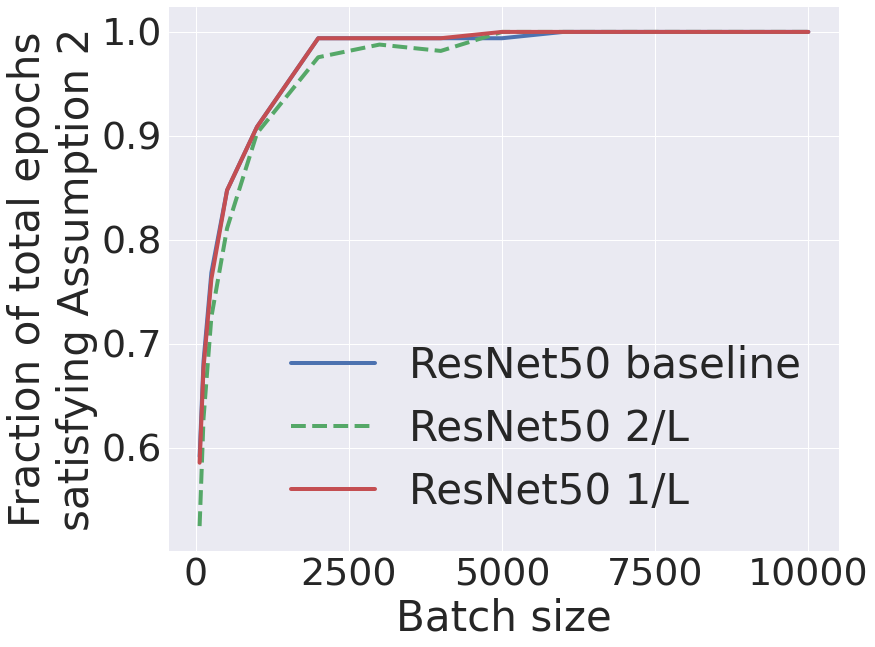}
	\vspace{-1.75em}
	\caption{\blue{Fraction of total epochs (164) where Assumption \ref{acute_assumption} holds vs batch\\size for ResNet20 [left], ResNet50 [right] standard training on CIFAR10 with all three schedules. The Assumption always holds for large batch sizes.}}
	\label{fig:acute_assumption_monitor}
	\vspace{-1.5em}
\end{figure}
\section{EXPERIMENTAL RESULTS} \label{sec:results}
\begin{table*}[t]
\centering
\vspace{0.60em}
\caption{\blue{Standard trained models on CIFAR10 and CIFAR100 evaluated on 20-step PGD attack with $\epsilon = 1 / 255$.
\\
\vspace{-0.35em}
Bold and underlined numbers denote the best and 2nd best PGD Attack accuracy in each row.}}
\vspace{-0.5em}
\begin{tabular}{c @{\hspace{0.9\tabcolsep}} c|cc|cc|cc|cc}
\hline
Dataset                   & Model          & \multicolumn{2}{c|}{baseline $\eta^1$} & \multicolumn{2}{c|}{\begin{tabular}[c]{@{}c@{}}PoE-motivated \\ $\eta^1=1/\Ls_{\text{est}}$ (Ours)\end{tabular}} & \multicolumn{2}{c|}{\begin{tabular}[c]{@{}c@{}}Largest convergent \\ $\eta^1=2/\Ls_{\text{est}}$ (Ours)\end{tabular}} & \multicolumn{2}{c}{Section \ref{subsec:Lips_esti}, \ref{subsec:batch} Parameters} \\ \hline
                          &                & Clean              & PGD Attack               & Clean                                              & PGD Attack                                                         & Clean                                                         & PGD Attack                                                                   & $\Ls_{\text{est}}$, (M, N), epochs        & Batch       \\ \hline
CIFAR                   & ResNet20      &   81.55 \scriptsize{$\pm$ 2.6} & 24.69 \scriptsize{$\pm$ 2.3} & 83.03 \scriptsize{$\pm$ 0.2} & {\ul 28.72 \scriptsize{$\pm$ 1.3}} & 81.92 \scriptsize{$\pm$ 1.3} & \textbf{31.9 \scriptsize{$\pm$ 3.1}}                                                          & 3.526, (200, 100), 164              & 5000        \\
    10                      & ResNet50      & 84.44 \scriptsize{$\pm$ 2.0} & 24.77 \scriptsize{$\pm$ 2.5} & 84.29 \scriptsize{$\pm$ 2.0} & {\ul 27.35 \scriptsize{$\pm$ 2.4}} & 84.24 \scriptsize{$\pm$ 2.7} & \textbf{35.82 \scriptsize{$\pm$ 2.8}}                                                                    & 10.79, (200, 164), 164              & 2000        \\
                          & ResNet110     & 83.77 \scriptsize{$\pm$ 0.5} & 28.81 \scriptsize{$\pm$ 2.5} & 82.63 \scriptsize{$\pm$ 0.9} & \textbf{39.08 \scriptsize{$\pm$ 2.5}} & 84.62 \scriptsize{$\pm$ 0.4} & {\ul 34.22 \scriptsize{$\pm$ 2.8}}                                                           & 11.85, (200, 164), 164              & 1000        \\
                          & DenseNet40    & 82.97 \scriptsize{$\pm$ 2.8} & 12.11 \scriptsize{$\pm$ 1.1} & 85.75 \scriptsize{$\pm$ 2.4} & {\ul 14.81 \scriptsize{$\pm$ 0.8}} & 87.92 \scriptsize{$\pm$ 2.1} & \textbf{15.97 \scriptsize{$\pm$ 1.6}}                                                            & 5.429, (100, 100), 300              & 2000        \\ \hline
CIFAR & ResNet50      & 63.46 \scriptsize{$\pm$ 4.4} & 6.9 \scriptsize{$\pm$ 1.1} & 63.59 \scriptsize{$\pm$ 4.1} & {\ul 7.56 \scriptsize{$\pm$ 1.1}} & 65.01 \scriptsize{$\pm$ 4.0} & \textbf{8.51 \scriptsize{$\pm$ 1.4}}                                                         & 8.75, (200, 164), 164              & 256         \\
100                          & ResNet110     & 62.47 \scriptsize{$\pm$ 4.2} & 9.43 \scriptsize{$\pm$ 1.6} & 60.94 \scriptsize{$\pm$ 4.1} & {\ul 12.52 \scriptsize{$\pm$ 1.9}} & 62.42 \scriptsize{$\pm$ 4.8} & \textbf{13.48 \scriptsize{$\pm$ 3.6}}                                                        & 14.48, (200, 164), 164              & 256         \\
                          & DenseNet40    & 60.0 \scriptsize{$\pm$ 0.2} & 1.82 \scriptsize{$\pm$ 0.1} & 60.0 \scriptsize{$\pm$ 0.5} & \textbf{2.11 \scriptsize{$\pm$ 0.1}} & 61.97 \scriptsize{$\pm$ 0.6} & {\ul 2.0 \scriptsize{$\pm$ 0.1}}
                          & 10.47, (100, 100), 300              & 256         \\ \hline
\end{tabular}
\vspace{-0.5em}
\label{table_1}
\end{table*}
\begin{table*}[t]
\centering
\caption{\blue{Adversarial trained model on CIFAR10 evaluated on Autoattack with $\epsilon=8 / 255$. (*) indicates extra unlabeled data used.
\\
\vspace{-0.35em}
Bold and underlined numbers denote the best and 2nd best Autoattack accuracy in each row.}}
\vspace{-1em}
\begin{tabular}{c|cc|cc|cc|cc}
\hline
Approach; Model          & \multicolumn{2}{c|}{Current SOTA} & \multicolumn{2}{c|}{\begin{tabular}[c]{@{}c@{}}PoE-motivated \\ $\eta^1=1/\Ls_{\text{est}}$ (Ours)\end{tabular}} & \multicolumn{2}{c|}{\begin{tabular}[c]{@{}c@{}}Largest convergent \\ $\eta^1=2/\Ls_{\text{est}}$ (Ours)\end{tabular}} & \multicolumn{2}{c}{Section \ref{subsec:Lips_esti}, \ref{subsec:batch} Parameters} \\ \hline
             & Clean      & Autoattack             & Clean                                       & Autoattack                                                 & Clean                               & Autoattack                                      & $\Ls_{\text{est}}$, (M, N), epochs        & Batch       \\ \hline
TRADES; WRN34-10 &
84.81 \scriptsize{$\pm$ .29} & 52.12 \scriptsize{$\pm$ .09} & 84.51 \scriptsize{$\pm$ .19} & \textbf{52.56 \scriptsize{$\pm$ .26}} & 83.44 \scriptsize{$\pm$ .15} & {\ul 52.27 \scriptsize{$\pm$ .04}} &
7.497, (99, 25), 75 & 128                        \\ 
PGD-AT; ResNet-50 &
85.98 \scriptsize{$\pm$ .09} & 42.66 \scriptsize{$\pm$ .08} & 86.21 \scriptsize{$\pm$ .32} & \textbf{43.03 \scriptsize{$\pm$ .13}} & 86.35 \scriptsize{$\pm$ .11} & {\ul 42.98 \scriptsize{$\pm$ .16}} &
10.40, (55, 150), 150 & 256                        \\ 
RST(*); WRN28-10 &      
89.48 \scriptsize{$\pm$ .05} &59.38 \scriptsize{$\pm$ .14} & 89.5 \scriptsize{$\pm$ .15} & {\ul 59.6 \scriptsize{$\pm$ .11}} & 89.48 \scriptsize{$\pm$ .07} & \textbf{59.7 \scriptsize{$\pm$ .08}} &
13.46, (160, 200), 200 & 256                        \\ 
\hline
\end{tabular}
\vspace{-1.5em}
\label{table_2}
\end{table*}

Following implementation details in Sections \ref{subsec:Lips_esti}, \ref{subsec:batch}, we perform standard training and AT with baseline, \PoE{}-motivated and largest convergent schedules. The clean \& adversarial accuracy (across 5 random seeds) \& relevant parameters for standard training is presented in Table \ref{table_1}; for AT in Table \ref{table_2}. For standard training, we analyze ResNet20, 50, 110 \cite{he2016resnet} \& DenseNet 40 \cite{huang2017densenet} on CIFAR10 \& the latter 3 on CIFAR100. The baseline training starts at $\eta^1=0.1$. We test each trained model on a 20-step PGD adversary with $\epsilon=\frac{1}{255}$ \& step-size $\frac{0.1}{255}$. \blue{We also use the PyTorch baseline's standard weight decay $1$e-$4$, momentum $0.9$, and a step schedule with learning rate scaled down by 10 at epochs $81, 122$ for ResNets \& $150, 225$ for DenseNet.} For CIFAR10 AT, we train ResNet50 in PGD-AT framework \cite{madry2017towards}; WideResNet (WRN) 34-10 \cite{zagoruyko2016wide} in TRADES \cite{zhang2019theoretically}; WRN 28-10 in RST \cite{carmon2019unlabeled}. PGD-AT \& TRADES decay $\eta$ by 10 every 50 epochs \& once at the 75th epoch respectively. RST uses 500K additional unlabelled images from the TinyImages dataset \cite{carmon2019unlabeled} \& a cosine annealing schedule. \blue{We follow the SOTA code of all three AT methods for other hyperparameters \&} test on Autoattack with $\epsilon=8/{255}$.

\section{DISCUSSION AND FUTURE WORK} \label{sec:conclusion}
Table \ref{table_1} shows a clear improvement in PGD attack accuracy over baseline (while maintaining similar clean accuracy) with both the largest convergent and PoE-motivated schedules. This demonstrates that our approach is promising in practical SGD. In Table \ref{table_2}, across various AT frameworks/models evaluated on Autoattack (where small improvements are considered noteworthy), we have consistent increase in Autoattack accuracy over SOTA and continued clean accuracy similarity with our schedules. Thus, we note that, even with the varied dynamics of AT frameworks, our approach acts as a 'force-multiplier' for robustness.


Lastly, based on the success of the largest convergent schedule \& our conservative upper bound, we conjecture that starting with a learning rate just below $\frac{2}{\Ls}$ also guarantees \PoE{} of GD. An immediate next step includes proving the conjecture. In addition, future work can focus on extending our sufficient condition proof (for PoE of GD) to PoE of SGD and its variants.

\normalem 
\bibliography{root.bib}

\begin{thebibliography}{10}
\providecommand{\url}[1]{#1}
\csname url@rmstyle\endcsname
\providecommand{\newblock}{\relax}
\providecommand{\bibinfo}[2]{#2}
\providecommand\BIBentrySTDinterwordspacing{\spaceskip=0pt\relax}
\providecommand\BIBentryALTinterwordstretchfactor{4}
\providecommand\BIBentryALTinterwordspacing{\spaceskip=\fontdimen2\font plus
\BIBentryALTinterwordstretchfactor\fontdimen3\font minus
  \fontdimen4\font\relax}
\providecommand\BIBforeignlanguage[2]{{%
\expandafter\ifx\csname l@#1\endcsname\relax
\typeout{** WARNING: IEEEtran.bst: No hyphenation pattern has been}%
\typeout{** loaded for the language `#1'. Using the pattern for}%
\typeout{** the default language instead.}%
\else
\language=\csname l@#1\endcsname
\fi
#2}}

\bibitem{szegedy2013intriguing}
C.~Szegedy \emph{et~al.}, ``Intriguing properties of neural networks,''
  \emph{arXiv preprint arXiv:1312.6199}, 2013.

\bibitem{carlini2017adversarial}
N.~Carlini and D.~Wagner, ``Adversarial examples are not easily detected:
  Bypassing ten detection methods,'' in \emph{Proceedings of the 10th ACM
  Workshop on Artificial Intelligence and Security}, 2017, pp. 3--14.

\bibitem{athalye2018obfuscated}
A.~Athalye, N.~Carlini, and D.~Wagner, ``Obfuscated gradients give a false
  sense of security: Circumventing defenses to adversarial examples,'' in
  \emph{International Conference on Machine Learning}, 2018.

\bibitem{nar2019persistency}
K.~Nar and S.~S. Sastry, ``Persistency of excitation for robustness of neural
  networks,'' \emph{arXiv preprint arXiv:1911.01043}, 2019.

\bibitem{sastry2011adaptive}
S.~Sastry and M.~Bodson, \emph{Adaptive control: stability, convergence and
  robustness}.\hskip 1em plus 0.5em minus 0.4em\relax Courier Corporation,
  2011.

\bibitem{lu1998robust}
S.~Lu and T.~Basar, ``Robust nonlinear system identification using
  neural-network models,'' \emph{IEEE Transactions on Neural networks}, 1998.

\bibitem{polycarpou1991identification}
M.~M. Polycarpou and P.~A. Ioannou, \emph{Identification and control of
  nonlinear systems using neural network models: Design and stability
  analysis}.\hskip 1em plus 0.5em minus 0.4em\relax Citeseer, 1991.

\bibitem{wood1996estimation}
G.~Wood and B.~Zhang, ``Estimation of the lipschitz constant of a function,''
  \emph{Journal of Global Optimization}, vol.~8, pp. 91--103, 1996.

\bibitem{weng2018evaluating}
T.-W. Weng \emph{et~al.}, ``Evaluating the robustness of neural networks: An
  extreme value theory approach,'' \emph{arXiv:1801.10578}, 2018.

\bibitem{lecun1998gradient}
Y.~LeCun \emph{et~al.}, ``Gradient-based learning applied to document
  recognition,'' \emph{Proceedings of the IEEE}, vol.~86, no.~11, 1998.

\bibitem{cifar}
A.~Krizhevsky, G.~Hinton, \emph{et~al.}, ``Learning multiple layers of features
  from tiny images,'' 2009.

\bibitem{madry2017towards}
A.~Madry \emph{et~al.}, ``Towards deep learning models resistant to adversarial
  attacks,'' \emph{arXiv preprint arXiv:1706.06083}, 2017.

\bibitem{croce2020reliable_autoattack}
F.~Croce and M.~Hein, ``Reliable evaluation of adversarial robustness with an
  ensemble of diverse parameter-free attacks,'' in \emph{ICML}, 2020.

\bibitem{srikant_PoE}
S.~Srikant and M.~Akella, ``Persistence filter-based control for systems with
  time-varying control gains,'' \emph{Systems and Control Letters}, 2009.

\bibitem{rbf1}
D.~Gorinevsky, ``On the persistency of excitation in radial basis function
  network identification of nonlinear systems,'' \emph{IEEE Transactions on
  Neural Networks}, vol.~6, no.~5, pp. 1237--1244, 1995.

\bibitem{kurdila1995persistency}
A.~Kurdila, F.~J. Narcowich, and J.~D. Ward, ``Persistency of excitation in
  identification using radial basis function approximants,'' \emph{SIAM journal
  on control and optimization}, vol.~33, no.~2, pp. 625--642, 1995.

\bibitem{lecuyer2019certified}
M.~Lecuyer \emph{et~al.}, ``Certified robustness to adversarial examples with
  differential privacy,'' in \emph{IEEE Symposium on Security \& Privacy 2019}.

\bibitem{cohen2019certified}
J.~Cohen \emph{et~al.}, ``Certified adversarial robustness via randomized
  smoothing,'' in \emph{International Conference on Machine Learning}, '19.

\bibitem{goodfellow2014explaining}
I.~J. Goodfellow, J.~Shlens, and C.~Szegedy, ``Explaining and harnessing
  adversarial examples,'' \emph{arXiv preprint arXiv:1412.6572}, 2014.

\bibitem{zhang2019theoretically}
H.~Zhang \emph{et~al.}, ``Theoretically principled trade-off between robustness
  and accuracy,'' in \emph{International Conference on Machine Learning}, '19.

\bibitem{carmon2019unlabeled}
Y.~Carmon \emph{et~al.}, ``Unlabeled data improves adversarial robustness,''
  \emph{arXiv preprint arXiv:1905.13736}, 2019.

\bibitem{fazlyab2019efficient}
M.~Fazlyab \emph{et~al.}, ``Efficient and accurate estimation of lipschitz
  constants for deep neural networks,'' \emph{arXiv:1906.04893}, 2019.

\bibitem{herrera2020estimating}
C.~Herrera \emph{et~al.}, ``Estimating full lipschitz constants of deep neural
  networks,'' \emph{arXiv preprint arXiv:2004.13135}, 2020.

\bibitem{nesterov2018lectures}
Y.~Nesterov \emph{et~al.}, \emph{Lectures on convex optimization}.\hskip 1em
  plus 0.5em minus 0.4em\relax Springer, 2018.

\bibitem{bertsekas1997nonlinear}
D.~P. Bertsekas, ``Nonlinear programming,'' \emph{Journal of the Operational
  Research Society}, vol.~48, no.~3, pp. 334--334, 1997.

\bibitem{li2017visualizing}
H.~Li, Z.~Xu, G.~Taylor, C.~Studer, and T.~Goldstein, ``Visualizing the loss
  landscape of neural nets,'' \emph{arXiv preprint arXiv:1712.09913}, 2017.

\bibitem{wu2020adversarial}
D.~Wu \emph{et~al.}, ``Adversarial weight perturbation helps robust
  generalization,'' \emph{Advances in Neural Information Processing Systems},
  2020.

\bibitem{bottou2018optimization}
L.~Bottou, F.~E. Curtis, and J.~Nocedal, ``Optimization methods for large-scale
  machine learning,'' \emph{Siam Review}, 2018.

\bibitem{keskar2016large_batch}
N.~S.~o. Keskar, ``On large-batch training for deep learning: Generalization
  gap and sharp minima,'' \emph{arXiv:1609.04836}, 2016.

\bibitem{he2016resnet}
K.~He \emph{et~al.}, ``Deep residual learning for image recognition,'' in
  \emph{IEEE Conference on Computer Vision and Pattern Recognition}, 2016.

\bibitem{huang2017densenet}
G.~Huang \emph{et~al.}, ``Densely connected convolutional networks,'' in
  \emph{IEEE Conference on Computer Vision and Pattern Recognition}, 2017.

\bibitem{zagoruyko2016wide}
S.~Zagoruyko and N.~Komodakis, ``Wide residual networks,'' \emph{arXiv preprint
  arXiv:1605.07146}, 2016.

\end{thebibliography}
\bibliographystyle{IEEEtran}

\newpage
\section*{APPENDIX}

\begin{algorithm}[h]
\SetAlgoNoLine%
\SetKwInOut{Input}{Input}
\SetKwInOut{Output}{Output}
\Input{All saved gradients and parameters $(\nabla l(\theta^i), \theta^i)$, M, N, intial shape choices}
\Output{$\Ls_{\text{est}}$ }
\textbf{for} shape$_0$ in initial shape choices \textbf{do}\\
\quad\quad \textbf{for} j = 1, \ldots, M \textbf{do}\\
\quad\quad \quad\quad Sample $N$ points: $(\nabla l(\theta^i), \theta^i)_{i=1,\ldots,N}$.\\
\quad\quad \quad\quad Compute $N/2$ slopes between consecutive pairs of points:
$$
    s_i = \frac{||\nabla l(\theta^{i+1}) - \nabla l (\theta^i)||_2}{||\theta^{i+1} - \theta^i||_2}, \;\;\; i=1, 3, 5, ..., N.
$$\\
\quad\quad \quad\quad Compute maximum of the $N/2$ slopes: $l_j = \max \{s_1, s_3, s_5, ..., s_{N}\}$.\\
\quad\quad \textbf{end for}\\
\quad\quad (shape, location, scale) $\leftarrow$ Fit three parameter reverse Weibull distribution to $\{l_1, \ldots, l_M\}$ given initial shape value = shape$_0$. \\
\quad\quad p-value $\leftarrow$ Kolmogrov-Smirnov goodness-of-fit test. \\
\textbf{end for}\\
\Return{scale corresponding to largest p-value}
\caption{Estimation of Certified Lipschitz Constant}
\label{alg_est}
\end{algorithm}


\section{Estimation of Certified Lipschitz Constant \texorpdfstring{$\Ls$}{L} via Extreme Value Theory}
\label{app:full_lips}

We detail the estimation algorithm previously  discussed in Section \ref{subsec:Lips_esti} in Algorithm \ref{alg_est}. We depict our heuristic for (M, N) tuning in Figure \ref{fig:opt1} for a ResNet-20 model \cite{he2016resnet} and a significance value of $\alpha=0.55$. The row corresponding to $(M, N)=(200,100)$ in Figure \ref{fig:opt1}(a) is the only one that has both, p-values larger \& lesser than $\alpha$ (satisfying our heuristic) and the largest of these corresponds to a Lipschitz constant of 3.5258 (see red box on Figure \ref{fig:opt1}(b)). Thus, our estimated Lipschitz constant is $\Ls_{\text{est}}=3.5258$. Please find the complete extended version of Figure \ref{fig:opt1}(a), \textit{i.e.} a complete heat map of all (M,N) tuples vs p-values in Figure \ref{fig:opt1_complete}.

\begin{figure*}[t]
    \centering
    \includegraphics[width=0.8\linewidth]{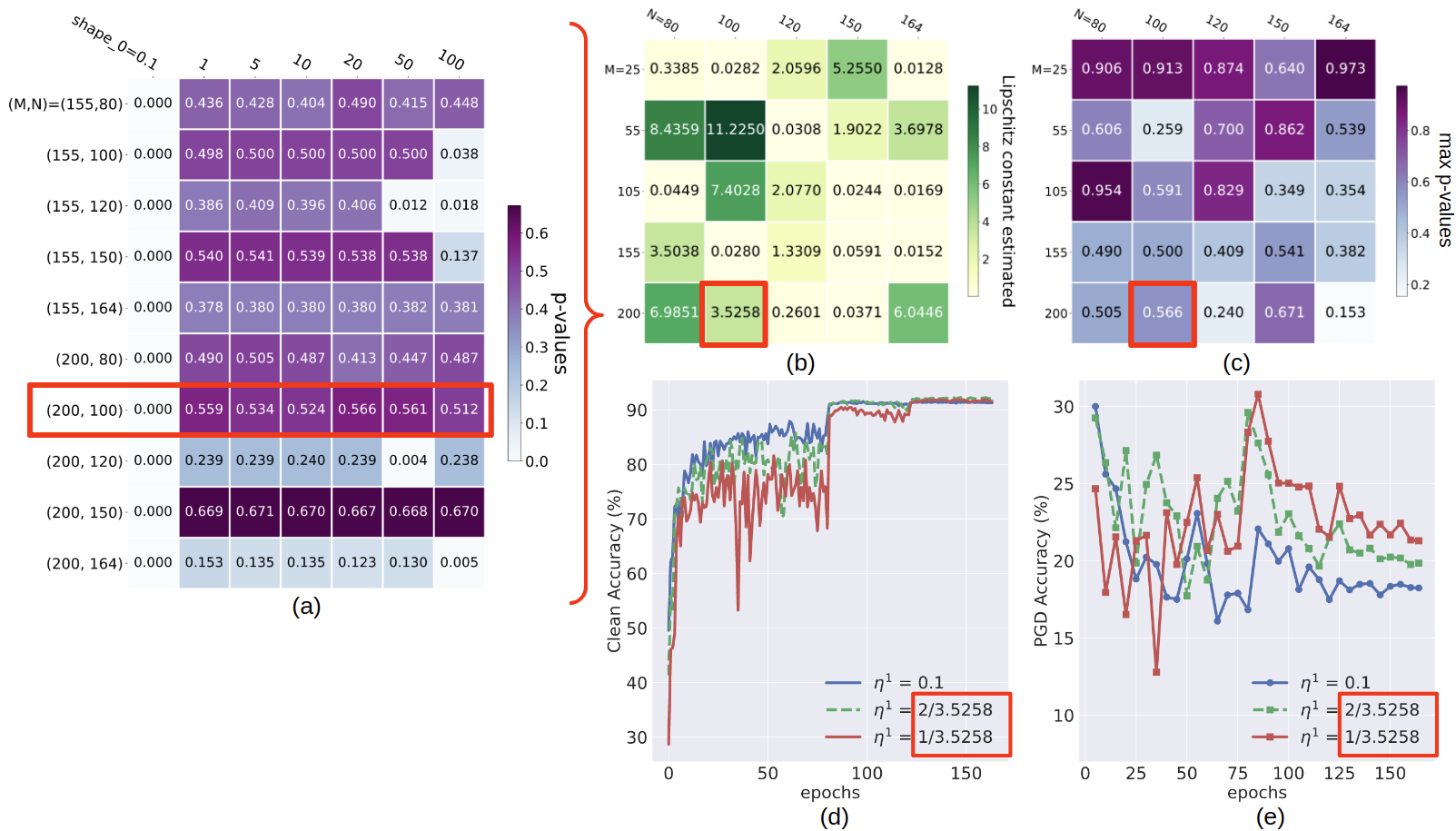}
    \caption{Estimating $\Ls$ for ResNet-20 model in standard classifier training using initial shape choices $=\{0.1,1,5,10,20,50,100\}$, $M$ $\in \{25,55,105,155,200\}$, $N$ $\in\{80,100,120,150,164\}$, and significance value $\alpha=0.55$. Here, (a) shows a heat map of p-values for some (M, N) tuples vs initial shape (shape$_0$) values; (b) shows Lipschitz constant estimates for all M, N values in a heat map; (c) depicts their corresponding max p-values (also in a heat map); (d) and (e) are reproductions of Figure \ref{opt1_only_plot} and depict the variation of clean and PGD attack accuracy as a function of epochs for all three schedules (baseline, PoE-motivated and largest convergent).}
    \label{fig:opt1}
\end{figure*}

\textbf{Analysis of time complexity and memory overhead.} Algorithm \ref{alg_est} maintains a $O(mn)$ time complexity which is negligible in comparison to the model training time ($m, n$ are the number of values of hyperparamters $M, N$ tried in Algorithm \ref{alg_est}). Our primary increase in training time is a consequence of having to train a baseline model and then another model with a \PE{} schedule. This results in a $2\times$ increase in time complexity. We hope future work can help boost performance (for example, by adapting learning rates online to satisfy \PoE{} conditions). We also note that there is a small memory overhead in having to save gradients plus parameters after every epoch for use in Lipschitz estimation post training. This overhead is given by $O(n_{\text{epochs}} (n_{\text{params}} + n_{\text{pixels}}))$ where $n_{\text{epochs}}$, $n_{\text{params}}$, and $n_{\text{pixels}}$ denote the number of epochs, model parameters and input image pixels respectively.

\textbf{Limitations of the estimation algorithm.} 
The estimation algorithm is inherently random because it depends on the gradients and parameters saved during the training process which can change with each run even when using the same random seed. Yet, the advantage of our results are that future work can introduce a better estimation algorithm (preferably with less inherent randomness) for $\Ls$ and use it in conjunction with our PoE-motivated or largest convergent learning rate schedule for increased adversarial robustness.

\section{Details of Adversarial Training Frameworks and Autoattack} \label{app:AT_frameworks}
We describe the adversarial training frameworks analyzed in this work and the autoattack benchmark used to evaluate models trained in said frameworks below.

\textbf{PGD-AT} \cite{madry2017towards}: The general adversarial training min-max optimization problem is given by
\begin{align*}
    \arg \min_{\theta} \E_{(X, Y) \in \Xs \times \Ys} \left[ \max_{\delta \in \mathbb{S}} L (h_{\Theta}(X+\delta), Y) \right]
\end{align*}
where $\mathbb{S}_p = \{\delta \; | \; ||\delta||_p < \epsilon\}$, $X, Y$ represent batch training data \& labels and the rest of the notation is defined in Section \ref{sec:prob}. We are primarily concerned with $l_{\infty}$ perturbations in this work which is why we have $\mathbb{S} = \mathbb{S}_{\infty}$. The inner maximization is solved by projected gradient descent (PGD) on the negative loss function (for $K$ steps with $\alpha$ step size) to get an adversarial example represented as $X^{(K)} = X + \delta^{(K)}$. The perturbed data point in the $(t+1)$-th step (\textit{i.e.} $X^{(t+1)}$) is given by\\
\begin{align*}
    X^{(t+1)} = \prod_{X+\mathbb{S}} (X^{(t)} +  \alpha \; \text{sgn}(\nabla_X L(h_{\Theta}(X^{(t)}), Y)))
\end{align*}
with initialization $X^{(0)} = X + \delta^{(0)}$ where $\delta^{(0)}$ can be set to $0$ or to any random point within $\mathbb{S}$. The latter case is called PGD with random initialization. The $\prod_{x+\mathbb{S}}$ denotes projecting perturbations of perturbed data points into the set $\mathbb{S}$.

\textbf{TRADES} \cite{zhang2019theoretically}: In TRADES, a theoretically motivated surrogate loss that balances the trade-off between standard and robust accuracy is minimized. The TRADES loss function is given by,
\begin{align*}
    L^{\text{TRADES}}_{\Theta} = L(h_{\Theta}(X), Y) + \beta \max_{\delta \in \mathbb{S}} D_{\text{KL}} (h_{\Theta}(X+\delta) || h_{\Theta}(X))
\end{align*}
where $D_{\text{KL}}$ represents Kullback–Leibler (KL) divergence and $\beta$ is a hyperparameter that controls the aforementioned trade-off.

\textbf{RST} \cite{carmon2019unlabeled}: In RST, a separate standard model is trained over CIFAR10 and used to generate pseudo-labels for unlabelled images from the TinyImages dataset \cite{carmon2019unlabeled}. Then a robust model is trained over the unlabelled data and its pseudo-labels by minimizing the TRADES loss given above. By this self-supervised training process, an adversarial-trained robust classifier is obtained.



\textbf{Autoattack} \cite{croce2020reliable_autoattack}: Autoattack consists of 4 attacks -- Auto-PGD on cross entropy loss (white-box), Auto-PGD on difference of logits ratio loss (also white-box), Fast adaptive boundary attack (black-box) and Square attack (also black-box). Evaluation on autoattack has very little (0.01\%) to no variance in different runs. Moreover, it has only one hyperparameter $\epsilon$ (usually set to $8/255$) while all others are fixed and abstracted away from the evaluation making comparison across models and frameworks easy.

\textbf{Hyperparameters for Adversarial Training}: We set momentum to $0.9$ in all three frameworks and set weight decay to 5e-4 in PGD-AT \& RST; 2e-4 in TRADES. The $\beta$ parameter in the TRADES formulation of adversarial loss (which is also used in RST) is set to $0.6$. It does not exist for the adversarial loss in PGD-AT. The same perturbation budget of $\epsilon=8/255$, attack steps $=10$, and attack step-size of $0.007$ are used in all three methods. These hyperparameters are obtained from the current SOTA of the three frameworks as given in \cite{madry2017towards, zhang2019theoretically, carmon2019unlabeled}.

\section{Additional Details of Experiments}

\textbf{Data Augmentation for Standard and Adversarial Training}: Following the common practice for CIFAR datasets (and following the SOTA implementations of all 4 adversarial training frameworks), training images are augmented with random crops (padding by 4 pixels and cropping to 32 $\times$ 32) and random horizontal flips.

\textbf{Computation resources used in running experiments}: We ran the experiments on either two Nvidia GeForce RTX 3090 GPUs (each with 24 GB of memory) or two Nvidia Quadro RTX 6000 GPUs (each with 24 GB of memory). The CPUs used were Intel Xeon Gold processors @ 3 GHz.

\begin{figure}[H]
\centering
  \includegraphics[width=\linewidth]{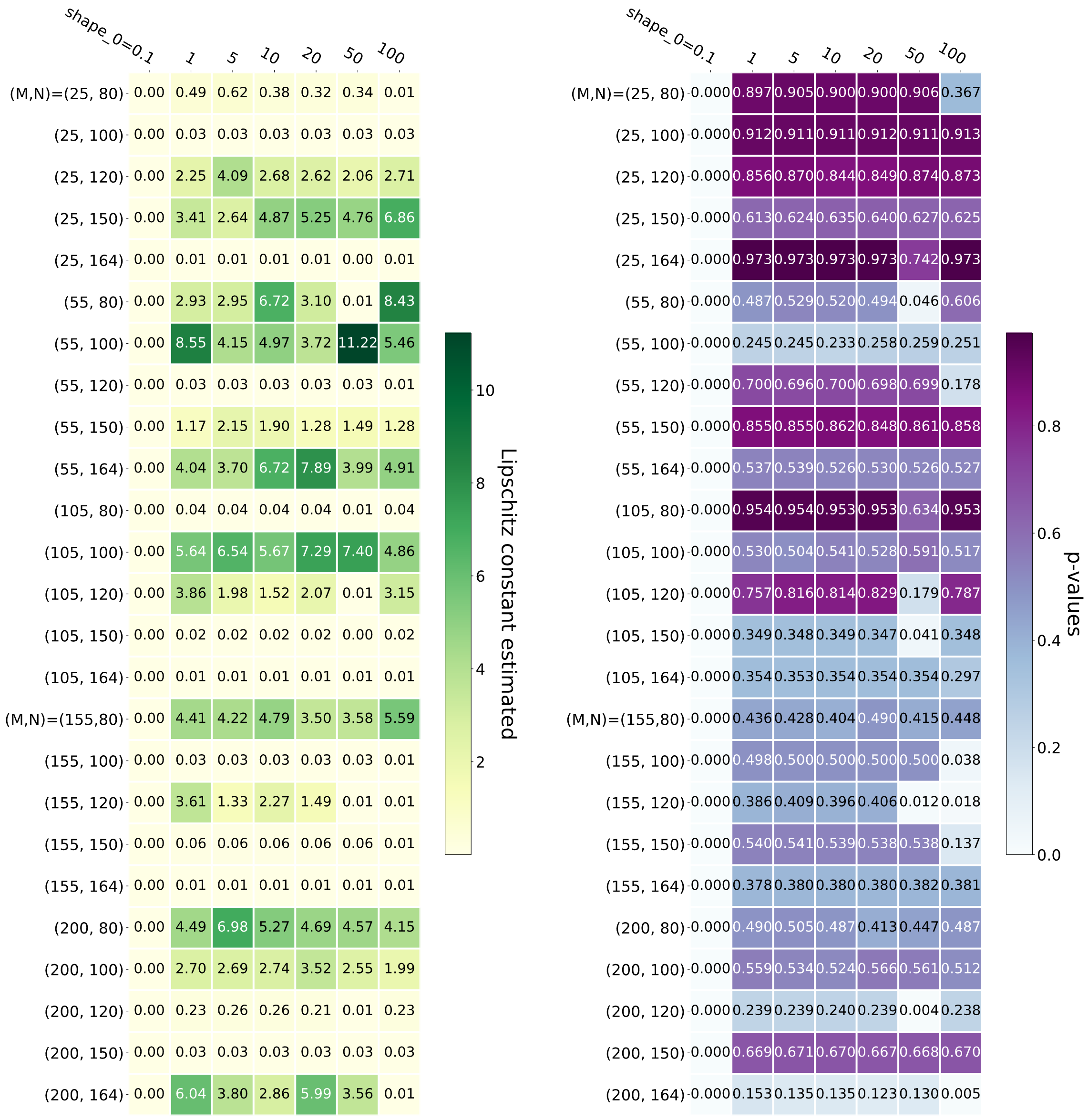}
  \caption{[LEFT] heat map of Lipschitz constants estimated (\textit{i.e.} fitted scale parameter) for various (M, N) tuples vs various initial shape parameters. [RIGHT] heat map of corresponding p-values for various (M, N) tuples vs various initial shape parameters. }
  \label{fig:opt1_complete}
\end{figure}

\textbf{Code bases utilized}: The code for LeNet5 on MNIST is based on \url{https://github.com/ChawDoe/LeNet5-MNIST-PyTorch} (No license). Standard training in CIFAR10, CIFAR100 for all models is based on code from a repository of PyTorch baselines at \url{https://github.com/bearpaw/pytorch-classification} (MIT license). We used the Advertorch python library at \url{https://github.com/BorealisAI/advertorch} (GNU general public license) for PGD implementation in standard training.

In adversarial training, the code for PGD-AT framework \cite{madry2017towards} is from \url{https://github.com/MadryLab/robustness} (MIT license), the code for TRADES \cite{zhang2019theoretically} framework is from \url{https://github.com/yaodongyu/TRADES} (MIT license) and the code for RST \cite{carmon2019unlabeled} is from \url{https://github.com/yaircarmon/semisup-adv} (MIT license). Our Lipschitz constant estimation code is based on previous work by \cite{weng2018evaluating} and can be found at \url{https://github.com/huanzhang12/CLEVER} (Apache license).

\end{document}